\newtheorem{assumption}{Assumption}
\newcommand{\xbf}{\mathbf{x}}
\newcommand{\ybf}{\mathbf{y}}
\newcommand{\zbf}{\mathbf{z}}
\newcommand{\bbf}{\mathbf{b}}
\newcommand{\hbf}{\mathbf{h}}
\newcommand{\ubf}{\mathbf{u}}
\newcommand{\vbf}{\mathbf{v}}
\newcommand{\reta}{r_{\eta}}
\DeclareMathOperator{\prox}{\mathrm{prox}}
\DeclareMathOperator*{\argmin}{\mathrm{arg\,min}}
\DeclareMathOperator*{\argmax}{\mathrm{arg\,max}}
\newcommand{\Phibf}{\mathbf{\Phi}}
\begin{document}
\pagestyle{headings}
\mainmatter
\def\ECCVSubNumber{370}  
\title{A Novel Learnable Gradient Descent Type Algorithm for Non-convex Non-smooth Inverse Problems} 

\titlerunning{ResGD-Net}
%
\author{Qingchao Zhang\inst{1}\and
Xiaojing Ye\inst{2} \and
Hongcheng Liu\inst{1} \and
Yunmei Chen \inst{1}}
\authorrunning{Q. Zhang et al.}
%
\institute{University of Florida, Gainesville, Florida 32611, USA \and
Georgia State University, Atlanta, Georgia 30303, USA
\\
\email{ $^1$\{qingchaozhang,liu.h,yun\}@ufl.edu, $^2$xye@gsu.edu} }
\maketitle

\begin{abstract}
Optimization algorithms for solving nonconvex inverse problem have attracted significant interests recently. However, existing methods require the nonconvex regularization to be smooth or simple to ensure convergence.
In this paper, we propose a novel gradient descent type algorithm, by leveraging the idea of residual learning and Nesterov's smoothing technique, to solve inverse problems consisting of general nonconvex and nonsmooth regularization with provable convergence.
Moreover, we develop a neural network architecture intimating this algorithm to learn the nonlinear sparsity transformation adaptively from training data, which also inherits the convergence to accommondate the general nonconvex structure of this learned transformation.
Numerical results demonstrate that the proposed network outperforms the state-of-the-art methods on a variety of different image reconstruction problems in terms of efficiency and accuracy.

\keywords{Inverse problem, deep learning, learnable optimization, image reconstruction}
\end{abstract}

\section{Introduction}
\label{sec:intro}
These years have witnessed the tremendous success of deep learning in a large variety of real-world application fields \cite{DLH14,HSW89,LPW17,Yarotsky}.
At the heart of deep learning are the deep neural networks (DNNs) which have provable approximation power and the substantial amount of data available nowadays for training these DNNs.
%
Deep learning can be considered as a data-driven approach since the DNNs are mostly trained with little or no prior information on the underlying functions to be approximated.
However, there are several major issues of generic DNNs that have hindered the application of deep learning in many scientific fields: (i) Generic DNNs may fail to approximate the desired functions if the training data is scarce; (ii) The training of these DNNs are prone to overfitting, noises, and outliers; (iii) The result DNNs are mostly ``blackboxes'' without rigorous mathematical justification and can be very difficult to interpret.

Recently, learned optimization algorithm (LOA) as a promising approach to address the aforementioned issues has received increasing attention.
LOA is aimed at combining the best of the mathematically interpretable optimization algorithms and the powerful approximation ability of DNNs, such that the desired functions can be learned by leveraging available data effectively.
In particular, an LOA is often constructed by unrolling an iterative optimization algorithm, such that one or multiple layers of the LOA correspond to one iteration of the algorithm, and the parameters of these layers are then learned from data through the training process.

In the field of computer vision and image processing, most existing optimization algorithms are developed based on either smooth or convex objective functions with relatively simple, handcrafted structures.
The schemes and convergence of these algorithms heavily rely on the strict assumptions on these structures.
However, the networks in the corresponding LOAs are trained to have rather complex, nonsmooth \emph{and} nonconvex structures.
In this case, the LOAs only have superficial connections to the original optimization algorithms, and there are no convergence guarantee on these LOAs due to the learned complex structures.

The goal of this paper is to develop a gradient descent type optimization algorithm to solve general nonsmooth and nonconvex problems with provable convergence, and then map this algorithm to a deep reconstruction network, called ResGD-Net, that can be trained to have rather complex structures but still inherit the convergence guarantee of the algorithm. Our method possesses the following features: (i) We tackle the nonsmooth issue of the optimization problem by the Nesterov's smoothing technique \cite{nesterov2005smooth} with rigorous, provable convergence; (ii) We employ an iterate selection policy based on objective function value to safeguard convergence of our method; (iii) We integrate the residual network structure \cite{ResNet} into the proximal gradient scheme of our algorithm for improved efficiency in network training.

The remainder of this paper is organized as follows. In Section \ref{sec:related}, we review the recent literature on learned optimization algorithms. In Section \ref{mainsec}, we present our gradient descent type algorithm for solving general nonconvex and nonsmooth optimization problems, and map it to a deep neural network that allows the regularization term to be learned from training data. The convergence and complexity analysis are also provided. In Section \ref{experiment}, we conduct a number of numerical experiments on natural and medical image reconstruction problems to show the promising performance of our proposed method. We provide several concluding remarks in Section \ref{sec:conclusion}.

\section{Related Work}
\label{sec:related}
The majority of computer vision and imaging problems are formulated as regularized inverse problems as follows:
\begin{equation}\label{eq:cp}
\min_\xbf f(\xbf;\zbf) + r(\xbf),
\end{equation}
where $f$ is the data fidelity term that measures the discrepancy between the candidate solution $\xbf$ and the observed data $\mathbf{z}$, and $r$ is a regularization term that imposes prior knowledge or preference on the solution $\xbf$.
The regularization term $r(\xbf)$ is critical to obtain high quality solution from \eqref{eq:cp}, as the data fidelity $f$ is often underdetermined, and the data $\zbf$ can be incomplete and noisy in real-world applications.
In the inverse problem literature, $r$ is often handcrafted and has simple structure so that the problem \eqref{eq:cp} can be relatively easy to solve with convergence guarantee.
However, these simple handcrafted regularization terms may not be able to capture the complex features of the underlying solution $\xbf$, and hence \eqref{eq:cp} produces undesired results in practice.
This motivates the study of LOAs in recent years which replace the handcrafted components with trained ones by leveraging the large amount of data available.

Existing LOAs can be approximately categorized into two groups.
The first group of LOAs appeared in the literature are motivated by the similarity between the iterative scheme of a traditional optimization algorithm (e.g., proximal gradient algorithm) and a feed forward neural network.
Provided instances of training data, such as ground truth solutions, an LOA replaces certain components of the optimization algorithm with parameters to be learned from the data.
The pioneer work \cite{KY2010} in this group of LOAs is based on the well-known iterative shrinkage thresholding algorithm (ISTA) for solving the LASSO problem $\min_\xbf (1/2)\cdot \|\Phi \xbf-\zbf\|^2+\lambda \|\xbf\|_1$ by iterating $\xbf^{k+1}=\mathrm{shrink}(\xbf^k-\tau \Phi ^{\top}(\Phi \xbf^k-\zbf); \lambda \tau)$, where $\tau \in (0,1/\|\Phi ^{\top} \Phi \|]$ is the step size, and $[\mathrm{shrink}(\xbf;\lambda)]_i=\mathrm{sign}(x_i) \cdot \max(0, |x_i|-\lambda)$ for $i=1,\dots,n$ represents the component-wise soft shrinkage of $\xbf=(x_1,\dots,x_n)$.
In \cite{KY2010}, a learned ISTA network, called LISTA, is proposed to replace $\Phi^{\top
}$ by a weight matrix to be learned from instance data to reduce iteration complexity of the original ISTA.
The asymptotic linear convergence rate for LISTA is established in \cite{CLWY2018} and \cite{LCW2019}.
Several variants of LISTA were also developed using low rank or group sparsity \cite{SBS2015},  $\ell_0$ minimization \cite{XWG16} and learned approximate message passing \cite{BSR2017}.
The idea of LISTA has been extended to solve composite problems with linear constraints, known as the differentiable linearized alternating direction method of multipliers (D-LADMM) \cite{XWZ2019}.
These LOA methods, however, still employ handcrafted regularization and require closed form solution of the proximal operator of the regularization term.

The other group of LOAs follow a different approach to solve the inverse problem \eqref{eq:cp} with regularization term $r$ learned from training data.
The goal of these LOAs is to replace the handcrafted regularization $r$, which is often overly simplified and not able to capture the complex features of the solution $\xbf$ effectively, by employing multilayer perceptrons (MLP) adaptively trained from data.
Recall that a standard approach to solving \eqref{eq:cp} is the proximal gradient (PG) method:
\begin{equation}\label{PG}
\xbf^{k+1} = \prox_{\alpha _k R}(\bbf^k):=\argmin_\xbf\ \frac{1}{2}\|\xbf-\bbf^k\|^2+ \alpha_k r(\xbf),
\end{equation}
where $\bbf^k = \xbf^k - \alpha_k \nabla f(\xbf^k; \zbf)$ and $\alpha_k > 0$ is the step size in the $k$th iteration.
Learning regularization $r$ in \eqref{eq:cp} effectively renders the proximal term $\prox_{\alpha_k r}$ in \eqref{PG} being replaced by an MLP.
Therefore, one avoids explicit formation of the regularization $g$, but creates a neural network with prescribed $K$ phases, where each phase mimics one iteration of the proximal gradient method \eqref{PG} to compute $\bbf_{k}$ as above and $\xbf_{k}=\hbf_{k}(\bbf_{k})$.
The CNN $\hbf_{k}$ can also be cast as a residual network (ResNet) \cite{ResNet} to represent the discrepancy between $\bbf_{k}$ and the improved $\xbf_{k}$ \cite{zhang2017learning}.
Such a paradigm has been embedded into  half quadratic splitting in DnCNN \cite{zhang2017learning}, ADMM in \cite{chang2017one,meinhardt2017learning} and primal dual methods in \cite{AO18,LCW2019,meinhardt2017learning,wang2016proximal} to solve the subproblems. 
To improve over the generic black-box CNNs above, several LOA methods are proposed to unroll numerical optimization algorithms as deep neural networks so as to preserve their efficient structures with proven efficiency, such as the ADMM-Net \cite{NIPS2016_6406} and ISTA-Net \cite{Zhang2018ISTANetIO}.
These methods also prescribe the phase number $K$, and map each iteration of the corresponding numerical algorithm to one phase of the network, and learn specific components of the phases in the network using training data.

Despite of their promising performance in a variety of applications, the second group of LOAs only have superficial connection with the original optimization algorithms.
These LOAs lose the convergence guarantee due to the presence of complex nonconvex and/or nonsmooth structures learned from data.
Moreover, certain acceleration techniques proven to be useful for numerical optimization algorithms are not effective in their LOA counterparts.
For example, the acceleration approach based on momentum \cite{Nesterov} can significantly improve iteration complexity of traditional (proximal) gradient descent methods, but does not have noticeable improvement when deployed in the network versions.
This can be observed by the similar performance of ISTA-Net \cite{Zhang2018ISTANetIO} and FISTA-Net \cite{zhang2017learning}.
One possible reason is that the LOA version has learned nonconvex components, for which a linear combination of $\xbf^k$ and $\xbf^{k-1}$ is potentially a worse extrapolation point in optimiztaion \cite{li2015accelerated}.
On the other hand, several network engineering techniques are shown to be very effective to improve practical performance of LOAs.
For example, ISTA-Net$^+$ \cite{Zhang2018ISTANetIO} employs the residual network structure \cite{ResNet} and results in substantially increased reconstruction accuracy over ISTA-Net.
The residual structure is also shown to improve network performance in a number of recent work, such as ResNet-v2 \cite{he2016identity}, WRN \cite{zagoruyko2016wide}, and ResNeXt \cite{xie2017aggregated}.

\section{A Novel Gradient Descent Type Algorithm } \label{mainsec}
In this section, we present a novel gradient decent type algorithm to solve the general nonsmooth and nonconvex optimization problem with focus application on image reconstruction:
\begin{equation}\label{eq:F}
  \min_{\xbf \in \Re^{n}}\{ F(\xbf) := f(\xbf) + r(\xbf )\},
\end{equation}
where $f$ is the data fidelity term (we omit the notation $\zbf$ as the data is given and fixed), $r$ is the regularization to be specified below, and $\xbf$ is the (gray-scale) image with $n$ pixels to be reconstructed.
To instantiate our derivation below, we use the linear least squares data fidelity term $f(\xbf) = (1/2)\cdot \|\Phi \xbf - \zbf\|^2$, where $\Phi \in \Re^{n' \times n}$ and $\zbf \in \Re^{n'}$ are given. However, as can be seen from our derivation below, $f$ can be any given smooth but nonconvex function with Lipschitz continuous gradient $\nabla f$.
Here $\|\xbf\|$ denotes the standard 2-norm of of a vector $\xbf$, and $\|\Phi\|$ stands for the induced 2-norm of a matrix $\Phi$.
In this paper, we would also like to leverage the robust shrinkage threshold operator in computer vision and image processing in the regularization $r$.
More specifically, we parametrize the regularization term $r$ as the $(2,1)$-norm of $g(\xbf)$, where $g=(g_1,\dots,g_m)$ with $g_i: \Re^{n} \to \Re^{d}$ for $i=1,\dots,m$ is a smooth nonlinear (with possibly nonconvex components) operator to be learned later:
\begin{equation}\label{eq:r}
r(\xbf)=\|g(\xbf)\|_{2,1} = \sum_{i = 1}^{m} \|g_i(\xbf)\|,
\end{equation}
where $g_i(\xbf) = ([g_i(\xbf)]_1,\cdots,[g_i(\xbf)]_d) \in \Re^{d}$, and $[g_i(\xbf)]_j \in \Re$ is the $j$th component (channel) of $g_i(\xbf)$ for $j=1,\dots,d$.
Here $m$ can be different from $n$ if the result $g(\xbf)$ changes the size of $\xbf$.
As we can see later, the $(2,1)$-norm in $r$ yields the soft shrinkage operation on $(g_1,\dots,g_m)$, which plays the role of a robust nonlinear activation function in the deep network architecture later.
The nonlinear operator $g$, on the other hand, is an adaptive sparse feature extractor learned from training data.
However, it is also worth noting that the derivation and convergence analysis below can also be applied to \eqref{eq:F} with general nonsmooth and nonconvex regularization $r$.
%


\subsection{Smooth Approximation of Nonsmooth Regularization}
\label{subsec:smooth}
To tackle the nonsmooth and nonconvex regularization term $r(\xbf)$ in \eqref{eq:r}, we first employ Nesterov's smoothing technique for convex function \cite{nesterov2005smooth} to smooth the (2,1)-norm part of $r(\xbf)$ (the nonlinear and nonconvex term $g$ remains untouched).
To this end, we first apply the dual form of (2,1)-norm in $r(\xbf)$ as follows:
\begin{equation}\label{eq:Rx}
r(\xbf) = \max_{\ybf \in Y}\ \langle g(\xbf), \ybf \rangle,
\end{equation}
where $\ybf \in Y$ is the dual variable, $Y$ is the dual space defined by
\[
Y := \cbr[1]{ \ybf=(\ybf_1,\dots,\ybf_m) \in \Re^{md}\ | \ \ybf_i=(y_{i1},\dots,y_{id}) \in \Re^{d}, \| \ybf_i \| \leq 1, \forall\, i }.
\]
For any $\eta>0$, we consider the smooth version $\reta$ of $r$ by perturbing the dual form \eqref{eq:Rx} as follows:
\begin{equation}\label{eq:r_eta}
\reta(\xbf) = \max_{\ybf \in Y} \ \langle g(\xbf), \ybf \rangle - \frac{\eta}{2} \|\ybf\|^2,
\end{equation}
Then one can readily show that
\begin{equation}\label{relation}
\reta(\xbf)\leq r(\xbf) \leq \reta(\xbf) +\frac{m\eta}{2},\quad \forall\, \xbf \in \Re^n.
\end{equation}
Note that the perturbed dual form in \eqref{eq:r_eta} has closed form solution: denoting
\begin{equation}\label{eq-7-1}
\ybf_\eta^* = \argmax_{\ybf \in Y}\ \langle g(\xbf) , \ybf \rangle - \frac{\eta}{2} \|{\ybf}\|^2,
\end{equation}
%
then solving \eqref{eq-7-1}, we obtain the closed form of $\ybf_\eta^*=([\ybf_\eta^*]_1,\dots,[\ybf_\eta^*]_m)$ with
\begin{equation}\label{eq-7}
[\ybf_\eta^*]_i =
\begin{cases}
\frac{1}{\eta} g_i(\xbf), & \mbox{if} \ \|g_i(\xbf) \| \leq \eta, \\
\frac{g_i(\xbf)}{\|g_i(\xbf)\|}, & \mbox{otherwise},
\end{cases}
\end{equation}
for $i=1,\dots,m$.
Plugging \eqref{eq-7} back into \eqref{eq:r_eta}, we have

\begin{equation}\label{eq-8}
 \reta(\xbf)  = \sum_{i \in I_1} \frac{1}{2\eta}  \|g_i(\xbf)\|^2 + \sum_{i \in I_2}(\|g_i(\xbf)\| - \frac{\eta}{2}),
\end{equation}
where $I_1 = \{ i \in [m] \ | \ \|g_i(\xbf)\| \leq \eta \}$, $I_2 = [m] \setminus I_1$, and $[m]:=\{1,\dots,m\}$.
Moreover, it is easy to show from \eqref{eq-8} that
\begin{equation}\label{eq-9}
\nabla \reta(\xbf) =  \sum_{i \in I_1} \frac{1}{\eta}  g_i(\xbf) \nabla g_i(\xbf)  + \sum_{i \in I_2}  \frac{g_i(\xbf)}{\|g_i(\xbf)\|} \nabla g_i(\xbf),
\end{equation}
where $\nabla g_i(\xbf)$ is the Jacobian of $g_i$ at $\xbf$.

The smoothing technique above allows us to approximate the nonsmooth function with rigorous convergence and iteration complexity analysis of our novel gradient descent algorithm for the original nonsmooth nonconvex problem \eqref{eq:F}.

\subsection{A Novel Gradient Descent Type Algorithm} \label{Res-GD}
In this subsection, we propose a novel gradient descent type algorithm for solving the minimization problem \eqref{eq:F} with smoothed regularization $\reta$ in \eqref{eq:r_eta}.
To employ the effective residual network structure \cite{ResNet} in its mapped network later, we need to incorporate the corresponding feature in our algorithmic design here.
To this end, we consider the objective function $F_\eta$ with $\reta$ as follows:
\begin{equation}\label{eq:F_eta}
F_\eta(\xbf):=f(\xbf) + \reta (\xbf).
\end{equation}
Note that, unlike $F$ in \eqref{eq:F}, $F_\eta$ is nonconvex but smooth due to the existence of gradient $\nabla \reta$ in \eqref{eq-9}.

Now we are ready to present our residual gradient descent (ResGD) algorithm.
In the $k$th iteration, we first compute
\begin{equation}
    \label{eq:b}
    \bbf^k = \xbf^k - \alpha_k \nabla f(\xbf^k),
\end{equation}
where $\alpha_k$ is the step size to be specified later.
We then compute two candidates, denoted by $\ubf^{k+1}$ and $\vbf^{k+1}$, for the next iterate $\xbf^{k+1}$ as follows:
\begin{subequations}
\begin{align}
\ubf^{k+1} &= \argmin_{\xbf}\ \langle \nabla f(\xbf^k) , \xbf - \xbf^k\rangle + \frac{1}{2\alpha_k} \| \xbf - \xbf ^k \| ^2 + \langle \nabla \reta(\bbf^k) , \xbf - \bbf^k\rangle \label{eq:u} \\ &\qquad\qquad\qquad   +  \frac{1}{2\beta_k} \| \xbf - \bbf ^k \| ^2, \nonumber \\
\vbf^{k+1} &= \argmin_{\xbf}\ \langle \nabla f(\xbf^k) , \xbf - \xbf^k\rangle + \langle \nabla \reta(\xbf^k) , \xbf - \xbf^k\rangle  + \frac{1}{2\alpha_k} \| \xbf - \xbf ^k \| ^2, \label{eq:v}
\end{align}
\end{subequations}
where $\beta_k$ is another step size along with $\alpha_k$.
Note that both minimization problems in \eqref{eq:u} and \eqref{eq:v} have closed form solutions:
\begin{subequations}
\begin{align}
\ubf^{k+1} &= \bbf^{k} - \gamma_k \nabla \reta(\bbf^{k}) \label{eq:u_closed} \\
\vbf^{k+1} &= \bbf^{k} - \alpha_k \nabla \reta(\xbf^{k}) \label{eq:v_closed}
\end{align}
\end{subequations}
where $\nabla \reta$ is defined in \eqref{eq-9}, and $\gamma_k = \frac{\alpha_k \beta_k}{\alpha_k + \beta_k}$.
Then we choose between $\ubf^{k+1}$ and $\vbf^{k+1}$ that has the smaller function value $F_\eta$ to be the next iterate $\xbf^{k+1}$:
\begin{equation}\label{eq:x}
    \xbf ^{k+1} =
\begin{cases}
\ubf^{k+1} &\text{if $F_\eta(\ubf^{k+1}) \leq F_\eta(\vbf^{k+1})$}, \\
\vbf^{k+1} & \text{otherwise}.
\end{cases}
\end{equation}
This algorithm is summarized in Algorithm \ref{alg1}.
If the $\ubf$-step is disabled, then Algorithm \ref{alg1} Res-GD reduces to the standard gradient descent method for $F_\eta$ in \eqref{eq:F_eta}.
However, this $\ubf$-step corresponds to a residual network structure in the ResGD-Net we construct later, and it is critical to improving the practical performance of ResGD-Net.
%
%
\begin{algorithm}[t]
\caption{Residual Gradient Descent Algorithm (Res-GD)}
\label{alg1}
\begin{algorithmic} 
\STATE \textbf{Input:} Initialization $\xbf^0$.
\STATE \textbf{Output:} $\xbf=\xbf^{K}$.
\FOR{$k=1,2,\dots,K$}
\STATE $\bbf \gets \xbf - \alpha_k \nabla f(\xbf)$.
\STATE $\ubf \gets \bbf - \gamma_k \nabla \reta(\bbf)$.
\STATE $\vbf \gets \bbf - \alpha_k \nabla \reta(\xbf)$.
\STATE If $F_\eta(\ubf) \le F_\eta(\vbf)$, $\xbf \gets \ubf$; Otherwise, $\xbf \gets \vbf$.
\ENDFOR
\end{algorithmic}
\end{algorithm}

\subsection{Convergence and Complexity Analysis}
\label{subsec:convergence}
In this subsection, we provide a comprehensive convergence analysis with iteration complexity of the proposed Algorithm \ref{alg1} Res-GD.
To this end, we need several mild assumptions on the functions involved in Algorithm \ref{alg1}.
More specifically, we have Assumptions (A1) and (A2) on the smooth nonlinear operator $g$ in the regularization function $r$ in \eqref{eq:r}, (A3) on the function $f$, and (A4) on the objective function $F$ in \eqref{eq:F}, as follows.
\begin{assumption}[A\ref{hyp:first}] \label{hyp:first}
%
%
The operator $g(\xbf)$ is continuously differentiable with $L_g$-Lipschitz gradient $\nabla g(\xbf)$, i.e., there exists a constant $L_g>0$, such that $\|  \nabla g(\xbf_1)-\nabla g(\xbf_2)\| \leq L_g \|  \xbf_1-\xbf_2\| $ for all $\xbf_1$, $\xbf_2\in\Re^{n}$.
\end{assumption}
\begin{assumption}[A\ref{hyp:second}] \label{hyp:second}
$\ \sup_{\xbf} \| \nabla g(\xbf)\| \leq M $ for some constant $M>0$.
\end{assumption}

\begin{assumption}[A\ref{hyp:third}] \label{hyp:third}
The function $f(\xbf)$ is continuously differentiable with $L_f$-Lipschitz gradient $\nabla f(\xbf)$.
\end{assumption}

\begin{assumption}[A\ref{hyp:fourth}] \label{hyp:fourth}
$F(\xbf)$ is coercive, i.e. $F(\xbf)\rightarrow \infty$ as
$\|\xbf\|\rightarrow \infty$.
\end{assumption}




Due to non-differentiable regularization function in \eqref{eq:F}, we cannot directly consider stationary points in the classical sense.
%
%
Therefore, we consider the following constrained minimization equivalent to \eqref{eq:F}:
\begin{subequations}
\label{eq:reformulate}
\begin{align}
\min_{\xbf,\,\ybf}\quad & f(\xbf)+\sum_{i=1}^{m} y_i\label{reformulate-obj} \\
\mbox{subject to}\quad & y_i^2 \geq \|  g_i(\xbf)\| ^2,\quad i=1,...,m,\label{eq:reformulate-const_1} \\
& y_i\geq 0, \quad i=1,...,m.\label{eq:reformulate-const_2}
\end{align}
\end{subequations}
where $\ybf=(y_1,\dots,y_m) \in \Re^m$.
To see the equivalence between \eqref{eq:F} and \eqref{eq:reformulate}, we observe that, for any fixed $\xbf$, the optimal $\ybf$ ensures that $y_i^2=\|  g_i(\xbf)\| _2^2$, and thus, $y_i=\|  g_i(\xbf)\| _2$ (c.f., $y_i\geq 0$) for all $i=1,...,m$.
Then $(\xbf^*,\ybf^*)$ is called a Karush-Kuhn-Tucker (KKT) point of \eqref{eq:reformulate} if the following conditions are satisfied:
\begin{subequations}
\label{eq:KKT}
\begin{align}
&\nabla f(\xbf^*)+2\sum_{i=1}^{m}\mu_i g_i(\xbf^*) \nabla g_i(\xbf^*) =0\label{stationarity_1}
\\&
  1-2\mu_i y_i^*-\lambda_i =0,\quad i=1,...,m\label{stationarity_2}
\\& \mu_i [\|  g_i(\xbf^*)\| ^2-(y_i^*)^2]=0,\quad i=1,...,m\label{eq:complementarity_1}
\\& \lambda_i y_i^* =0,\quad i=1,...,m\label{eq:complementarity_2}
\\& \lambda_i,\,\mu_i\geq 0,\quad i=1,...,m\label{eq:dual_feasibility}
\\& y_i^2\geq \|  g_i(\xbf)\| ^2,\quad y_i\geq 0,\quad i=1,...,m.\label{eq:primal_feasibility}
\end{align}
\end{subequations}
%
%
for some $\lambda_i,\mu_i \in \Re$, $i=1,...,m$.
Here $\mu_i$ and $\lambda_i$ are the Lagrangian multipliers associated with the constraints \eqref{eq:reformulate-const_1} and \eqref{eq:reformulate-const_2}, respectively.
In particular, \eqref{stationarity_1}-\eqref{stationarity_2} are stationarity, \eqref{eq:complementarity_1}-\eqref{eq:complementarity_2} are complementary slackness, and \eqref{eq:dual_feasibility} and \eqref{eq:primal_feasibility} stem from dual and primal feasibility, respectively.
To measure the closeness of an approximation generated by Algorithm \ref{alg1}, we propose to generalize the definition above to the $\epsilon$-KKT point as follows.
\begin{definition}\label{epsilon-KKT}
For any $\epsilon\geq 0$, $\xbf^*_{\epsilon}$ is called an $\epsilon$-KKT solution to \eqref{eq:F} if there exist $(\mu_i,\lambda_i,\,y_i)$, $i=1,...,m$, such that
\begin{subequations}
\label{eq:eps-KKT}
\begin{align}
&\big\| \nabla f(\xbf^*_{\epsilon})+2\sum_{i=1}^K\mu_i g_i(\xbf^*_{\epsilon}) \nabla g_i(\xbf^*_{\epsilon})\big\| \leq \epsilon \label{Con_1}
\\&
  1-2\mu_i y_i-\lambda_i =0,\quad i=1,...,m \label{Con_1.5}
\\& |\mu_i( \|  g_i(\xbf^*_{\epsilon})\| ^2-y_i^2) | \leq \epsilon,\quad i=1,...,m; \label{Con_2}
\\& \lambda_i y_i =0,\quad i=1,...,m\label{Con_3}
\\& \lambda_i,\,\mu_i\geq 0,\quad i=1,...,m \label{Con_4}
\\& y_i\geq \|  g_i(\xbf^*_{\epsilon})\| ,\quad i=1,...,m. \label{Con_5}
\end{align}
\end{subequations}
\end{definition}
In this definition, \eqref{Con_1}--\eqref{Con_4} correspond to the $\epsilon$-approximation to \eqref{stationarity_1}--\eqref{eq:dual_feasibility} and \eqref{Con_5} is derived from the primal feasibility.
%

Our goal is then to study the convergence of the proposed algorithm and its iteration complexity to obtain an $\epsilon$-KKT solution to \eqref{eq:F} in the sense of Definition \ref{epsilon-KKT}.
To this end, we first need the following lemma to characterize the Lipschitz constant for $\nabla \reta$.
\begin{lemma}\label{Lipschitz_constant_eta}
Under Assumptions (A\ref{hyp:first}) and (A\ref{hyp:second}), the gradient $\nabla \reta$ of the smoothed function $\reta$ defined in \eqref{eq:r_eta} is Lipschitz continuous with constant $mL_g+\frac{M^2}{\eta}$.
\end{lemma}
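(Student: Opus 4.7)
The plan is to start from the explicit formula \eqref{eq-9} for $\nabla \reta(\xbf)$, which (by Danskin's theorem applied to \eqref{eq:r_eta}) can be written compactly as
\[
\nabla \reta(\xbf) = \sum_{i=1}^{m} \nabla g_i(\xbf)^{\top} [\ybf_\eta^*(\xbf)]_i,
\]
with $[\ybf_\eta^*(\xbf)]_i$ given by \eqref{eq-7}. To bound $\nabla \reta(\xbf_1)-\nabla \reta(\xbf_2)$, I would use the standard add-and-subtract trick, splitting it as a sum of two pieces: one that changes only the Jacobian $\nabla g_i$, and one that changes only the dual variable $[\ybf_\eta^*]_i$.

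For the first piece, I would use the elementary observation from \eqref{eq-7} that $\|[\ybf_\eta^*(\xbf)]_i\|\leq 1$ for every $i$ and $\xbf$, together with the componentwise consequence of Assumption (A1), namely $\|\nabla g_i(\xbf_1)-\nabla g_i(\xbf_2)\|\leq L_g\|\xbf_1-\xbf_2\|$ (inherited from Lipschitzness of the full Jacobian via the inequality $\|\nabla g_i(\cdot)\|\leq \|\nabla g(\cdot)\|$). A triangle inequality over $i=1,\dots,m$ then gives the bound $m L_g\|\xbf_1-\xbf_2\|$ on this piece. The factor $m$ (as opposed to $\sqrt{m}$) comes precisely from this crude termwise summation, which matches the constant claimed in the lemma.

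For the second piece, the key step is to establish that $\xbf\mapsto \ybf_\eta^*(\xbf)$ is itself Lipschitz. Completing the square in \eqref{eq:r_eta} rewrites the maximization as
\[
\ybf_\eta^*(\xbf) \;=\; \argmin_{\ybf\in Y}\ \tfrac{\eta}{2}\big\|\ybf - g(\xbf)/\eta\big\|^{2},
\]
so componentwise $[\ybf_\eta^*(\xbf)]_i$ is the Euclidean projection of $g_i(\xbf)/\eta$ onto the unit ball. Since projection onto a convex set is nonexpansive and Assumption (A2) gives that $g$ is $M$-Lipschitz (by the mean value theorem along a segment), I would conclude $\|\ybf_\eta^*(\xbf_1)-\ybf_\eta^*(\xbf_2)\|\leq (M/\eta)\|\xbf_1-\xbf_2\|$ after summing the squared componentwise bounds. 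Combined with $\|\nabla g(\xbf_2)\|\leq M$ from (A2), this piece is controlled by $(M^2/\eta)\|\xbf_1-\xbf_2\|$.

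Adding the two bounds yields the stated Lipschitz constant $mL_g + M^2/\eta$. The only delicate point I anticipate is the bookkeeping of constants in the first piece: one has to use the crude termwise triangle inequality to obtain $mL_g$ (rather than the tighter $\sqrt{m}L_g$ one would get by treating $\nabla g$ as a single operator with $\|\ybf_\eta^*\|\leq \sqrt{m}$). Everything else is a routine application of Danskin's theorem plus nonexpansiveness of metric projection.
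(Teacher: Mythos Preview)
Your proposal is correct and follows essentially the same route as the paper: write $\nabla r_\eta(\xbf)=\nabla g(\xbf)^\top \ybf_\eta^*(\xbf)$, add and subtract to split into a Jacobian-variation term and a dual-variable-variation term, and bound the latter via the $\eta$-strong concavity of the inner maximization (which you phrase as nonexpansiveness of projection onto $Y$, while the paper derives the same inequality $\eta\|\ybf_1-\ybf_2\|\le\|g(\xbf_1)-g(\xbf_2)\|$ directly from the two variational inequalities). One small point worth noting: the paper actually bounds the Jacobian-variation term globally as $\|\nabla g(\xbf_1)-\nabla g(\xbf_2)\|\cdot\|\ybf_1\|\le L_g\sqrt{m}\,\|\xbf_1-\xbf_2\|$ and thus obtains the sharper constant $\sqrt{m}\,L_g+M^2/\eta$, which of course implies the stated $mL_g+M^2/\eta$; your deliberate use of the termwise triangle inequality to match the lemma's constant is therefore unnecessary but harmless.
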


\begin{proof}
We first define $\ybf_1$ and $\ybf_2$ as follows,
\begin{align*}
\ybf_1 &=\argmax_{\ybf\in Y}\ \langle g(\xbf_1),\,\ybf\rangle -\frac{\eta}{2} \| \ybf \|^2, \\
\ybf_2 &=\argmax_{\ybf\in Y}\ \langle g(\xbf_2),\,\ybf\rangle -\frac{\eta}{2} \| \ybf \|^2.
\end{align*}
Due to the concavity of the problems above (in $\ybf$) and the optimality conditions of $\ybf_1$ and $\ybf_2$, we have
\begin{align}
&\langle g(\xbf_1)-\eta \ybf_1,\,\ybf_2-\ybf_1\rangle\leq 0;
\\&
\langle g(\xbf_2)-\eta \ybf_2,\,\ybf_1-\ybf_2\rangle\leq 0.
\end{align}
Adding the two inequalities above yields
\begin{align}
\langle g(\xbf_1)-g(\xbf_2)-\eta \left(\ybf_1-\ybf_2\right),\,\ybf_2-\ybf_1\rangle\leq 0,
\end{align}
which, together with the Cauchy-Schwarz inequality, implies
\begin{equation*}
    \|  g(\xbf_1)-g(\xbf_2) \|  \cdot  \|  \ybf_1-\ybf_2 \| \geq \langle g(\xbf_1)-g(\xbf_2),\,\ybf_1-\ybf_2\rangle\geq \eta\, \|  \ybf_2-\ybf_1 \| ^2.
\end{equation*}
Therefore, $ \|  g(\xbf_1)-g(\xbf_2) \| \geq \eta\, \|  \ybf_1-\ybf_2 \| $.
Following the notations in Section 3.1, we have $\nabla r_\eta(\xbf)=\nabla g(\xbf)^\top \ybf^*$ where $\ybf^*=\argmax_{\ybf\in Y} \langle g(\xbf),\,\ybf\rangle-\frac{\eta}{2} \| \ybf \| ^2 = \argmin_{\ybf\in Y} \frac{\eta}{2}\|\ybf - \eta^{-1} g(\xbf)\|^2$ and $Y=\{\ybf \in \mathbb{R}^{md}\ | \ \|\ybf_i\| \le 1,\ 1\le i \le m\}$.
Therefore, the optimality of $\ybf_1$ and $\ybf_2$ above implies
\begin{align*}
& \| \nabla r_{\eta}(x_1) - \nabla r_{\eta}(x_2) \| = \left \| \nabla g(\xbf_1)^\top \ybf_1-\nabla g(\xbf_2)^\top \ybf_2\right \| \\
= & \left \| \left(\nabla g(\xbf_1)^\top \ybf_1-\nabla g(\xbf_2)^\top \ybf_1\right)+\left(\nabla g(\xbf_2)^\top \ybf_1-\nabla g(\xbf_2)^\top \ybf_2\right)\right \| \\
\le & \left \| \left(\nabla g(\xbf_1)  -\nabla g(\xbf_2) \right)^\top \ybf_1 \right \| +  \|  \nabla g(\xbf_2)  \|  \left \|   \ybf_1-  \ybf_2 \right \|
\\
\le & \left \|  \nabla g(\xbf_1)  -\nabla g(\xbf_2)  \right \|  \cdot \| \ybf_1   \| +  \frac{1}{\eta}\cdot  \|  \nabla g(\xbf_2)  \| \cdot  \|  g(\xbf_1)-g(\xbf_2) \| .
\end{align*}
Recalling the assumptions of (A1) and (A2), we have $ \| \nabla g(\xbf) \| \leq M$ for all $\xbf\in\Re^n$ and that $\nabla g(\xbf)$ is Lipschitz with constant $L_g$.
Since $\max_{\ybf\in Y} \|  \ybf  \| \leq \sqrt{m}$, we have
\[
\left \| \nabla g(\xbf_1)^\top \ybf_1-\nabla g(\xbf_2)^\top \ybf_2\right \| \leq \left(\sqrt{m}\cdot L_g+\frac{M^2}{\eta}\right)\,  \|  \xbf_1-\xbf_2 \|,
\]
which completes the proof.
%
\end{proof}
Our main results on the convergence and iteration complexity of Algorithm \ref{alg1} ResGD are summarized in the following theorem.

\begin{theorem} Assume (A1)--(A4) hold. For any initial $\xbf^0$ and constants $\alpha>\beta>1$, the following statements hold for the sequences $\{\xbf^k\}$ and $\{\vbf^k\}$ generated by Algorithm 1 with $(\alpha L_{\eta})^{-1}\leq \alpha_k \leq (\beta L_{\eta})^{-1}$ and $L_\eta:=L_f+m L_g+\frac{M^2}{\eta}$:
\begin{enumerate}
\item The sequence $\{\xbf^k\}$ is bounded. The function $F_\eta$ takes the same value, denoted by $F_\eta^*$, at all accumulation points of $\{\xbf^k\}$. Moreover, for any accumulation point $\xbf^*$, there is
\begin{equation}\label{GFx*}
\nabla F_{\eta}(\xbf^*)=0.
\end{equation}
\item
For any $\epsilon>0$, there exists $k \le \lfloor \frac{2 \alpha^2 L_{\eta}(F(\xbf^{0})-F^*_{\eta})}{(\beta-1)\epsilon^2} \rfloor + 1$ such that
\begin{equation}\label{GFxK}
\|\nabla F_{\eta}(\xbf^{k})\| \leq \epsilon.
\end{equation}
\item For any $\epsilon>0$, let $\eta=\epsilon$, then there exists $k \le \lfloor \frac{2(F_\eta(\xbf^{0})-F_\eta^*)\alpha^2 (L_f+mL_g +M^2/\epsilon)}{(\beta-1)\epsilon^2} \rfloor + 1=O(\epsilon^{-3})$,  such that $\xbf^k$ is an $\epsilon$-KKT solution to \eqref{eq:F} in the sense of Definition \ref{epsilon-KKT}.
\end{enumerate}
\end{theorem}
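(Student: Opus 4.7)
The crux of the plan is the observation that the $\vbf$-update is exactly a vanilla gradient step on the smoothed objective: from \eqref{eq:b} and \eqref{eq:v_closed},
\[
\vbf^{k+1} = \xbf^k - \alpha_k\bigl(\nabla f(\xbf^k) + \nabla \reta(\xbf^k)\bigr) = \xbf^k - \alpha_k \nabla F_\eta(\xbf^k).
\]
By (A3) and Lemma \ref{Lipschitz_constant_eta}, $\nabla F_\eta$ is Lipschitz with constant $L_\eta = L_f + mL_g + M^2/\eta$, so the descent lemma gives
\[
F_\eta(\vbf^{k+1}) \leq F_\eta(\xbf^k) - \alpha_k\bigl(1 - \tfrac{L_\eta \alpha_k}{2}\bigr) \|\nabla F_\eta(\xbf^k)\|^2.
\]
The selection rule \eqref{eq:x} guarantees $F_\eta(\xbf^{k+1})\le F_\eta(\vbf^{k+1})$, so the same per-step decrease carries to $\xbf^{k+1}$. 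The stepsize window $\alpha_k\in[(\alpha L_\eta)^{-1},(\beta L_\eta)^{-1}]$ with $\alpha>\beta>1$ then bounds the decrease coefficient below by a positive constant of order $1/(\alpha^2 L_\eta)$, which I will track to match the explicit factor $(\beta-1)/(2\alpha^2 L_\eta)$ in the theorem.

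For part (1), I first observe that $F_\eta \geq F - m\eta/2$ by \eqref{relation}, so coercivity of $F$ from (A4) implies coercivity of $F_\eta$. Since $\{F_\eta(\xbf^k)\}$ is monotone nonincreasing, it is bounded, hence $\{\xbf^k\}$ is bounded and has accumulation points, and $F_\eta(\xbf^k)\downarrow F_\eta^*$ for some finite $F_\eta^*$; continuity gives $F_\eta(\xbf^*) = F_\eta^*$ at every accumulation point. Telescoping the descent inequality yields $\sum_{k\geq 0}\|\nabla F_\eta(\xbf^k)\|^2 < \infty$, hence $\nabla F_\eta(\xbf^k)\to 0$, and continuity of $\nabla F_\eta$ upgrades this to $\nabla F_\eta(\xbf^*) = 0$.

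Part (2) is the standard complexity argument: if $\|\nabla F_\eta(\xbf^k)\|>\epsilon$ for every $k=0,\dots,K$, summing the per-step inequality gives
\[
(K+1)\cdot \frac{\beta-1}{2\alpha^2 L_\eta}\epsilon^2 \leq F_\eta(\xbf^0)-F_\eta^*,
\]
which is violated once $K+1$ exceeds the stated bound; therefore some iterate must satisfy $\|\nabla F_\eta(\xbf^k)\|\le\epsilon$ within that many iterations.

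The main work is in part (3): turning the gradient guarantee on $F_\eta$ into an $\epsilon$-KKT certificate for \eqref{eq:reformulate}. Given $\xbf^k$ with $\|\nabla F_\eta(\xbf^k)\|\le\epsilon$, I will construct multipliers by case, mirroring \eqref{eq-7}. For $i\in I_1$ (where $\|g_i(\xbf^k)\|\le \eta$), set $\mu_i = 1/(2\eta)$, $y_i = \eta$, $\lambda_i = 0$; for $i\in I_2$, set $\mu_i = 1/(2\|g_i(\xbf^k)\|)$, $y_i = \|g_i(\xbf^k)\|$, $\lambda_i = 0$. Comparing with \eqref{eq-9} shows $2\sum_i \mu_i g_i(\xbf^k)\nabla g_i(\xbf^k) = \nabla \reta(\xbf^k)$, so \eqref{Con_1} becomes exactly $\|\nabla F_\eta(\xbf^k)\|\le\epsilon$. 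Conditions \eqref{Con_1.5}, \eqref{Con_3}, \eqref{Con_4}, \eqref{Con_5} are immediate by construction, and the only nontrivial check is the slackness term in \eqref{Con_2}, which vanishes on $I_2$ and satisfies $|\mu_i(\|g_i\|^2-y_i^2)|\le \eta/2$ on $I_1$. Choosing $\eta=\epsilon$ (or any $\eta\le 2\epsilon$) validates all $\epsilon$-KKT conditions, and substituting $L_\eta = L_f+mL_g+M^2/\epsilon = O(\epsilon^{-1})$ into the part (2) bound produces the $O(\epsilon^{-3})$ iteration count. The one delicate point is this cubic dependence: it is driven solely by the slackness mismatch on $I_1$, and is the unavoidable cost of converting a smooth stationarity bound into a nonsmooth KKT certificate through Nesterov smoothing.
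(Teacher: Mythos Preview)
Your proposal is correct and follows essentially the same approach as the paper: both use the descent lemma for the $L_\eta$-smooth function $F_\eta$ applied to the $\vbf$-step, the selection rule $F_\eta(\xbf^{k+1})\le F_\eta(\vbf^{k+1})$, coercivity from (A4) and \eqref{relation} for boundedness, and the identical multiplier construction $y_i=\max\{\eta,\|g_i\|\}$, $\mu_i=1/(2y_i)$, $\lambda_i=0$ for part~(3). The only cosmetic difference is that the paper tracks the displacement $\|\vbf^{k+1}-\xbf^k\|$ and invokes the optimality condition of \eqref{eq:v} before converting via $\nabla F_\eta(\xbf^k)=(\xbf^k-\vbf^{k+1})/\alpha_k$, whereas you substitute $\vbf^{k+1}=\xbf^k-\alpha_k\nabla F_\eta(\xbf^k)$ immediately and work with $\|\nabla F_\eta(\xbf^k)\|$ throughout; this yields a slightly sharper descent constant $(2\alpha-1)/(2\alpha^2 L_\eta)$ which dominates the paper's $(\beta-1)/(2\alpha^2 L_\eta)$, so the stated bound follows a fortiori.
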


\begin{proof}
\textit{1.} Due to the optimality condition of $\vbf^{k+1}$ in the algorithm, we have
\begin{equation}\label{eqv}
    \langle\nabla F_{\eta}(\xbf^k), \vbf^{k+1}-\xbf^k\rangle+\frac{1}{2\alpha_k} \|  \vbf^{k+1}-\xbf^k \| ^2 \leq 0.
\end{equation}
Due to both of (A4) and Lemma 1 (under the assumptions of (A1) and (A2)), we know that $F_{\eta}(\xbf)$ has $L_\eta$-Lipschitz continuous gradient, where $L_\eta:=L_f+\sqrt{m} L_g+\frac{M^2}{\eta}$, which implies that
\begin{equation}\label{lipF}
  F_{\eta}(\vbf^{k+1}) \leq F_{\eta}(\xbf^k) + \langle\nabla F_{\eta}(\xbf^k), \vbf^{k+1}-\xbf^k\rangle+\frac{L_{\eta}}{2} \|  \vbf^{k+1}-\xbf^k \| ^2.
\end{equation}
Combining $\eqref{eqv}$, $\eqref{lipF}$ and $\alpha_k \le (\beta L_{\eta})^{-1}$ with $\beta>1$ yields
\begin{equation}\label{diff}
F_{\eta}(\vbf^{k+1})-F_{\eta}(\xbf^k)\leq - (\frac{1}{2\alpha_k}-\frac{L_{\eta}}{2}) \|  \vbf^{k+1}-\xbf^k \|^2 \le - \frac{(\beta-1) L_{\eta}}{2} \|  \vbf^{k+1}-\xbf^k \|^2.
\end{equation}
If $F_{\eta}(\ubf^{k+1})\leq F_{\eta}(\vbf^{k+1})$, then
$\xbf^{k+1}=\ubf^{k+1}$, and $F_{\eta}(\xbf^{k+1})=F_{\eta}(\ubf^{k+1})\leq F_{\eta}(\vbf^{k+1})$. If $F_{\eta}(\vbf^{k+1})< F_{\eta}(\ubf^{k+1})$, then
$\xbf^{k+1}=\vbf^{k+1}$, and $F_{\eta}(\xbf^{k+1})=F_{\eta}(\vbf^{k+1})$.
Therefore, in either case, $\eqref{diff}$ implies
\begin{equation}\label{recursive}
   F_{\eta}(\xbf^{k+1})\leq F_{\eta}(\vbf^{k+1}) \leq F_{\eta}(\xbf^{k})\leq \ldots \leq F_{\eta}(\xbf^{0}).
\end{equation}
for all $k\ge 0$.

Since $F(\xbf)$ is coercive, from $ r_\eta(\xbf)\leq r(\xbf) \leq  r_\eta(\xbf) +\frac{m\eta}{2}$, we know $F_{\eta}(\xbf)$ is also coercive.
Therefore, $\{\xbf^k\}$ and $\{\vbf^k\}$ are bounded, and hence $\{\xbf^k\}$ has at least one accumulation point.
Moreover, $\{F_{\eta}(\xbf^{k})\}$ is non-increasing due to \eqref{recursive} and bounded below, which means that $\{F_{\eta}(\xbf^{k})\}$ is a convergent (numerical) sequence.
Denote the limit of $\{F_{\eta}(\xbf^{k})\}$ by $F^*_\eta$.
Let $\xbf^*$ be any accumulation point of $\{\xbf^k\}$, i.e., there exists a subsequence $\{\xbf^{k_j}\}$ of $\{\xbf^k\}$, such that $\xbf^{k_j} \to \xbf^*$ as $j\to \infty$.
Then the continuity of $F_{\eta}(\xbf)$ implies that $F_{\eta}(\xbf^{k_j}) \to F_{\eta}(\xbf^*)$ as $j\to\infty$. Since $F_{\eta}(\xbf^{k_j})$ is a subsequence of the convergent sequence $F_{\eta}(\xbf^{k})$ which has limit $F^*_\eta$, we know $F_{\eta}(\xbf^*) = F^*_\eta$.
Note that $\xbf^*$ is an arbitrary accumulation point, therefore every accumulation point of $\{\xbf^{k}\}$ has the same function value $F^*_\eta$.

Summing up \eqref{diff} with respect to $k \ge 0$ and noting that $F_{\eta}(\xbf^k)\downarrow F^*_\eta=F_{\eta}(\xbf^*)$, we know that, with $\alpha_k \leq (\beta L)^{-1}$, there is
\begin{equation}\label{sum v-x}
    \sum_{k=0}^{\infty} \|  \vbf^{k+1}-\xbf^k \| ^2 \leq \frac{2(F_{\eta}(\xbf^0)-F_{\eta}(\xbf^*))}{(\beta-1) L_{\eta}} < \infty.
\end{equation}
Hence there is
\begin{equation} \label{norm v-x}
     \| \vbf^{k+1}-\xbf^k \| ^2 \rightarrow 0,\quad \mbox{as}\quad k\to\infty.
\end{equation}
From the optimality condition of $\vbf^{k+1}$, we have
\begin{equation} \label{GFv-x}
   \nabla F_{\eta}(\xbf^k)=\frac{\xbf^k-\vbf^{k+1}}{\alpha_k}.
\end{equation}
Combining \eqref{norm v-x} and \eqref{GFv-x}, and substituting $\xbf^k$ by any of its convergent subsequence $\{\xbf^{k_j}\}$ with limit $\xbf^*$ as above (also the corresponding subsequence of $\vbf^k$), we obtain
$\|\nabla F_{\eta}(\xbf^{k_j})\| \rightarrow 0$. Then from the continuity of $\nabla F_{\eta}$, we obtain $\nabla F_\eta (\xbf^*)=0$. This proves the first statement.
%


\textit{2.}
Since $(\alpha L_{\eta})^{-1}\leq \alpha_k \leq (\beta L_{\eta})^{-1}$ for some $\alpha >\beta>1$, \eqref{norm v-x} implies that there exists $K^*:=\min\{k: \|  \vbf^{k+1}-\xbf^k \| \leq (\alpha L_{\eta})^{-1}\epsilon \} < \infty$.
Note that $ \|  \vbf^{ k+1} -\xbf^{ k} \| ^2 \geq (\alpha L_{\eta})^{-2} \epsilon^2$ for all $k< K^*$.
Therefore \eqref{diff} implies that $F_{\eta}(\vbf^{k+1})-F_{\eta}(\xbf^k) \le-(\beta-1)\epsilon^2/(2\alpha^2 L_{\eta})$ for all $k< K^*$.
From \eqref{recursive} and the fact that $F_{\eta}(\xbf^k)\downarrow F^*_\eta=F_{\eta}(\xbf^*)$, we get
\begin{align*}
   0 &\leq F_{\eta}(\xbf^{K^*})-F_{\eta}(\xbf^*)=F_{\eta}(\xbf^0)-F_{\eta}(\xbf^*)+ \sum_{k=0}^{K^*-1}\left[
F_{\eta}(\xbf^{k+1})-F_{\eta}(\xbf^k)\right]\\
&\leq
-\frac{(\beta-1)\epsilon^2}{2 \alpha^2 L_\eta}\cdot K^*+F_{\eta}(\xbf^0)-F_{\eta}^*.
\end{align*}
Therefore,   $K^* \le \frac{2 \alpha^2 L_{\eta}(F(\xbf^{0})-F^*_{\eta})}{(\beta-1)\epsilon^2}$.
Moreover, by the definition of $K^*$, we have that
\[
\frac{\| \vbf^{K^*+1} -\xbf^{K^*} \|}{\alpha_{K^*}} \leq \frac{\epsilon}{\alpha_{K^*}\alpha L_\eta} \le  \epsilon.
\]
Therefore,
$ \|  \nabla F_\eta(\xbf^{K^*}) \| = \frac{1}{\alpha_{K^*}}\| \vbf^{{K^*}+1}-\xbf^{K^*} \| \leq \epsilon$.
Setting $k=K^*$ proves the claim.

\textit{3.}
To prove the last statement, we first show that for $\eta = \epsilon$, $\hat{\xbf} $ is an $\epsilon$-KKT solution to the original problem with nonsmooth $F$ as objective function  provided that $\|\nabla F_{\eta}(\hat{\xbf})\| \leq \epsilon$.
To this end, we note that
\begin{align}
\label{eq:KKT_solution_smoothed}
\nabla F_\eta(\hat{\xbf})=\nabla f(\hat{\xbf})+\sum_{i\in I_1}\frac{1}{\eta}g_i(\hat{\xbf})\nabla g_i(\hat{\xbf}) +\sum_{i\in I_2}\frac{g_i(\hat{\xbf})}{ \|  g_i(\hat{\xbf}) \| }\nabla g_i(\hat{\xbf}),
\end{align}
where $I_1=\{i\ | \ \|g_i(\hat{\xbf})\| \le \eta\}$ and $I_2=\{i\ | \ \|g_i(\hat{\xbf})\| > \eta\}$.
%
%
By setting $y_i=\max\{\eta,\, \|  g_i(\hat{\xbf}) \| \}$, $\mu_i=\frac{1}{2 \|  g_i(\hat{\xbf}) \| }$ if $\|  g_i(\hat{\xbf}) \| > \eta$ and $\frac{1}{2\eta}$ {otherwise},  and $\lambda_i=0$, for all $i=1,...,K$, we can easily verify that all the $\epsilon$-KKT conditions are satisfied at $\hat{\xbf}$ provided $\|\nabla F_{\eta}(\hat{\xbf})\| \leq \epsilon$.
Note that $\|\nabla F_{\eta}(\xbf^{K^*})\| \leq \epsilon$, we know $\xbf^{K^*}$ is an $\epsilon$-KKT solution to the original problem.

Furthermore, because $\eta= \epsilon$, we have $L_{\eta}\leq L_f+\sqrt{m}L_g +M^2/\epsilon$.
Then, for $(\alpha L_{\eta})^{-1}\leq \alpha_k \leq (\beta L_{\eta})^{-1}$, we have
\[
K^* \leq \frac{2\alpha^2 L_{\eta}(F_\eta(\xbf^{0})-F_\eta^*)}{ (\beta-1)\epsilon^2} \le \frac{2\alpha^2 (F_\eta(\xbf^{0})-F_\eta^*) (L_f+\sqrt{m}L_g +M^2/\epsilon)}{ (\beta-1)\epsilon^2}=O(\epsilon^{-3}).
\]
Setting $k=K^*$ proves the claim.
This completes the proof.
\end{proof}

\subsection{Residual Gradient Descent Network} \label{Network}

In this subsection, we construct a deep neural network imitating the proposed Algorithm \ref{alg1} with nonlinear function $g$ to be trained from data.
We first parametrize the function $g(\xbf)$ as a convolutional network as follows:
\begin{equation}\label{AsigmaB}
g(\xbf) = B \sigma (A\xbf),
\end{equation}
where $A \in \Re ^ {md \times n}$ and $B \in \Re^{md \times md}$ are the matrix representation of two convolution operations.
%
%
For example, to obtain a relative larger receptive field \cite{LeB17} for image reconstruction, we design $A$ to be a cascade of two convolutions, where the first convolution is with $d$ kernels of size $3\times 3$ and the second with $d$ kernels of size $3\times 3\times d$.
Besides, $B$ corresponds to convolution with $d$ kernels of size $3\times 3\times d$.
Here, $\sigma$ represents a component-wise activation function.
In this paper, we use the following smooth nonlinear activation $\sigma \in \mathcal{C}^1$:

%
%
%
\begin{equation}\label{eq-activation}
\sigma (x) =
\begin{cases}
0, & \mbox{if} \ x \leq -\delta, \\
\frac{1}{4\delta} x^2 + \frac{1}{2} x + \frac{\delta}{4}, & \mbox{if} \ -\delta < x < \delta, \\
x, & \mbox{if} \ x \geq \delta.
\end{cases}
\end{equation}
%
Here $\delta>0$ is a prescribed threshold (set to $0.1$ in our experiment).
Note that $g$ defined in \eqref{AsigmaB} satisfies both assumptions (A\ref{hyp:first})--(A\ref{hyp:fourth}) in Section \ref{subsec:convergence}.
%
%
From \eqref{eq-9} and \eqref{AsigmaB}, we have $g_i(\xbf)=(B\sigma A \xbf)_i$ and hence
\begin{equation}\label{eq-11}
 \nabla \reta(\xbf) = A^{\top} \sigma' (A \xbf) B^{\top} \del[2]{ \sum_{i \in I_1} \frac{(B\sigma A \xbf)_i}{\eta} + \sum_{i \in I_2} \frac{(B\sigma A \xbf)_i}{\|(B\sigma A \xbf)_i\|} },
\end{equation}
where $I_1 = \{ i \in [m] \ | \ \|(B\sigma A \xbf)_i\| \leq \eta \}$, $I_2 = \{ i\in [m] \ | \ \|(B\sigma A \xbf)_i\| > \eta \}$.

%

The detailed updating scheme of each phase of the proposed network is depicted in Fig. \ref{fig-2}.
Specifically, we prescribe the iteration number $K$, which is also the phase number of the proposed ResGD-Net.
We enable the step sizes $\alpha_k$ and $\gamma_k$ to vary in different phases, moreover, all $\{\alpha_k, \gamma_k\}_{k = 1} ^ K$ and threshold $\eta$ are designed to be learnable parameters fitted by data.
To further increase the capacity of the proposed network, we employ the learnable inverse operator. More precisely, we replace $A^{\top}$ and $B^{\top}$ in \eqref{eq-11} by learnable operators $\widetilde{A}\in \Re ^ {n \times md}$ and $\widetilde{B}\in \Re ^ {md \times md}$. To approximately achieve $\widetilde{A} \approx A^{\top}$ and $\widetilde{B} \approx B^{\top}$,
we incorporate the constraint term $ \mathcal{L}_{constraint} = \| \widetilde{A} - A^{\top} \| ^2_F + \| \widetilde{B} - B^{\top} \|^2_F$ to the loss function during training to acquire the data-driven inverse operators, where $\|\cdot\|_F$ is the Frobenius norm.
In addition, $\widetilde{A}$ is implemented as
a cascade of two transposed convolutional operators \cite{dumoulin2016guide} and $\widetilde{B}$ as one transposed convolutional operator, similar to $A$ and $B$.
\begin{figure}[t]
\vspace{-2pt}
\begin{center}
   \includegraphics[width=1.0\linewidth]{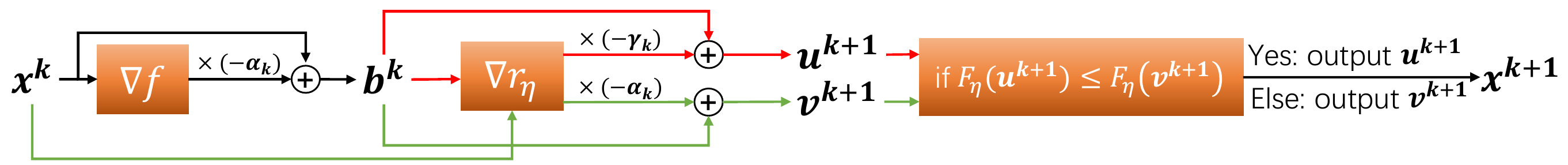}
\end{center}
\vspace{-7pt}
\caption{Illustration of the $k$th phase of ResGD-Net. The red and green arrows represent the updating for $\ubf ^{k+1}$ (Eq. \eqref{eq:u_closed}) and $\vbf ^{k+1}$ (Eq. \eqref{eq:v_closed}) respectively}
\vspace{-8pt}
\label{fig-2}
\end{figure}
\vspace{-5pt}
\subsubsection{Network Training:}
We denote $\Theta$
to be the set of all learnable parameters of the proposed ResGD-Net which consists of the weights of the convolutional operators $\{A, B\}$ and transposed convolutional operators $\{\widetilde{A}, \widetilde{B}\}$, step sizes $\{\alpha_k, \gamma_k\}_{k = 1} ^ K$ and threshold $\eta$. Given $N$ training data pairs $\{(\zbf^{(i)}, \xbf^{(i)})\}_{i=1}^{N}$, where each $\xbf^{(i)}$ is the ground truth data and $\zbf^{(i)}$ is the measurement of $\xbf^{(i)}$, the loss function $\mathcal{L}(\Theta)$ is defined to be the sum of the discrepancy loss $\mathcal{L}_{discrepancy}$ and the constraint loss $\mathcal{L}_{constraint}$:
\begin{equation}\label{loss}
  \mathcal{L}(\Theta) = \underbrace{\frac{1}{N}\sum_{i = 1}^N \|\xbf^K(\zbf^{(i)};\Theta) - \xbf^{(i)}\|^2}_{\mathcal{L}_{discrepancy}} + \vartheta \underbrace{ \{ \| \widetilde{A} - A^{\top} \| ^2_F + \| \widetilde{B} - B^{\top} \| ^2_F \} }_{\mathcal{L}_{constraint}},
\end{equation}
where $\mathcal{L}_{discrepancy}$ measures the discrepancy between the ground truth $\xbf^{(i)}$ and $\xbf^K(\zbf^{(i)};\Theta)$ which is the output of the $K$-phase network by taking $\zbf^{(i)}$ as the input. Here, the constraint parameter $\vartheta$ is set to be $10^{-3}$ in our experiment.
\section{Numerical Experiments}
\label{experiment}
To demonstrate the performance of the proposed algorithm and inspired network, we conduct extensive experiments on various image reconstruction problems and compare the results with some existing state-of-the-art algorithms.
Since the CNN in our design only provides a learnable regularization functional for the unrolled optimization algorithm, we adopt a step-by-step training strategy which imitates the iterating of optimization algorithm. More precisely, first we train the network with phase number $K = 3$, where each phase in the network corresponding to an iteration in optimization algorithm. After it converges, we add $2$ more phases to the end of it. Then with pretrained weights from $K=3$ we continue training the $5$-phase network until it converges, then $7$ phases, $9$ phases, etc., all the way until there is no noticeable improvement when we add more phases.

All the experiments in this section are performed on a machine with Nvidia GTX-1080Ti GPU of 11GB graphics card memory and implemented with the Tensorflow toolbox \cite{abadi2016tensorflow} in Python.
The learnable weights of convolutions are initialized by Xavier Initializer \cite{Glorot10understandingthe} and the threshold $\eta $ is initialized to be $0.01$. All the learnable parameters are trained by Adam Optimizer \cite{kingma2014adam}. The network is trained with learning rate 1e-4 for $500$ epochs when $K = 3$, followed by $200$ epochs when adding more phases.
Considering the graphics card memory and the cropped block size of images for training ($33 \times 33$ for nature image and $190 \times 190$ for MR image),  batch size $64$ and $2$ are decided when training the network with nature images and MR images respectively.
%
%
\subsection{Nature Image Compressive Sensing} \label{nature_image}
In this section, we conduct numerical experiments on nature image compressive sensing (CS) problems and compare the proposed ResGD-Net with some existing highly sophisticated methods.
For fair comparison, we use the same datasets among all methods, \textit{91 Images} for training and \textit{Set11} for testing \cite{kulkarni2016reconnet}.
The training sets are the extracted image luminance components which are then randomly cropped into $N = 88,912$ blocks of size $n = h \times w = 33^2$.
The experiments on different CS ratios $10 \%$, $25 \%$ and $50 \%$ are performed separately to compare the generality of the algorithms.
To create the data pairs $\{(\zbf^{(i)}, \xbf^{(i)})\}_{i=1}^{N}$ for training, where $\xbf^{(i)}$ is the image block and $\zbf^{(i)}$ is the CS measurement of $\xbf^{(i)}$, we first generate a random Gassuian measure matrix $\Phibf$ of size $ 10\%  n \times n, 25\%  n \times n $ and $ 50\%  n \times n$ whose rows are then orthogonalized, where this follows \cite{Zhang2018ISTANetIO};
then we apply $\zbf^{(i)} = \Phibf \xbf^{(i)}$ to generate the CS measurement.
When generating the testing data pairs from \textit{Set11} \cite{kulkarni2016reconnet}, we follow the same criterion as training data. All the testing results are evaluated on the average Peak Signal-to-Noise Ratio (PSNR) of the reconstruction quality.

\textbf{Comparison with some existing algorithms:}
In this part, we show the comparison results with some existing state-of-the-art algorithms, the variational methods TVAL3 \cite{li2013efficient}, D-AMP \cite{metzler2016denoising} and deep learning models IRCNN \cite{zhang2017learning}, ReconNet \cite{kulkarni2016reconnet} and ISTA-Net$^+$ \cite{Zhang2018ISTANetIO}. All the reconstruction results are tested on the avarage PSNR on \textit{Set11} \cite{kulkarni2016reconnet}, where the results are shown in Table \ref{cs-result}. Considering the trade-off between the network performance and complexity shown in the ablation study (Section \ref{para}), we determine the phase number $K = 19$ of our network when comparing with other algorithms. We observe that ResGD-Net outperforms all aforementioned algorithms by a large margin across all $10 \% $, $25 \%$ and $50 \%$ CS ratios.
In Fig. \ref{inception22-2} we show the reconstructed butterfly image with CS ratio $10\%$ and Barbara image with CS ratio $25\%$, it's clear that the proposed ResGD-Net is superior in preserving small patterns and details.
\begin{table}[t]
\vspace{-10pt}
\caption{Natural image CS reconstruction on data \textit{Set11} \cite{kulkarni2016reconnet} with CS ratios 10\%, 25\% and 50\%. Table shows the average PSNR (dB) of the comparison methods against ResGD-Net (19-phase). And the first five results of comparison algorithms are quoted from \cite{Zhang2018ISTANetIO}}
\label{cs-result}
\centering
\begin{tabular}{l|c|c|c}
\hline
\textbf{Algorithms} & \textbf{CS Ratio 10\%} & \textbf{CS Ratio 25\%} & \textbf{CS Ratio 50\%}\\ \hline
TVAL3 \cite{li2013efficient} & 22.99 & 27.92& 33.55 \\ \hline
D-AMP \cite{metzler2016denoising} & 22.64 & 28.46 &35.92 \\ \hline
IRCNN \cite{zhang2017learning} & 24.02 & 30.07 & 36.23\\ \hline
ReconNet \cite{kulkarni2016reconnet} & 24.28 & 25.60 & 31.50\\ \hline
ISTA-Net$^+$ (shared weights) \cite{Zhang2018ISTANetIO} & 26.51 & 32.08 & 37.59\\ \hline
ISTA-Net$^+$ \cite{Zhang2018ISTANetIO} & 26.64 & 32.57 & 38.07\\ \hline
\textbf{ResGD-Net} [Proposed] & \textbf{27.36} & \textbf{33.01} & \textbf{38.42}\\ \hline \end{tabular}
\end{table}
\begin{figure}[h]
\subfigure{
\label{fl-g} 
\includegraphics[width=0.152\linewidth]{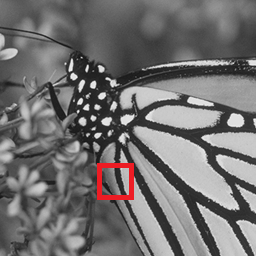}}
\subfigure{
\label{pa-g} 
\includegraphics[width=0.152\linewidth]{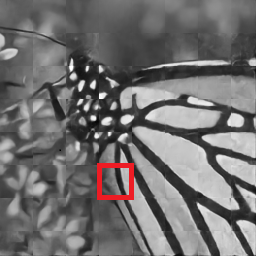}}
\subfigure{
\label{pa-epn} 
\includegraphics[width=0.152\linewidth]{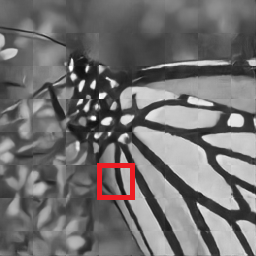}}
\subfigure{
\label{fl-g} 
\includegraphics[width=0.152\linewidth]{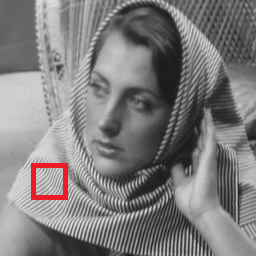}}
\subfigure{
\label{pa-g} 
\includegraphics[width=0.152\linewidth]{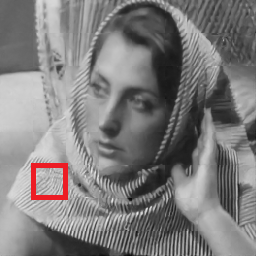}}
\subfigure{
\label{pa-epn} 
\includegraphics[width=0.152\linewidth]{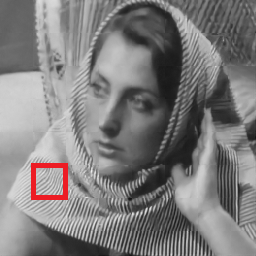}}
\setcounter{subfigure}{0}
\subfigure[\scriptsize{True}]{
\label{bu-ep-2} 
\includegraphics[width=0.15\linewidth]{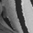}}
\subfigure[\scriptsize{ISTA-Net$^+$}]{
\label{bu-epn-2} 
\includegraphics[width=0.15\linewidth]{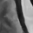}
}
\hspace{-5pt}
\subfigure[\scriptsize{ResGD-Net}]{
\label{bu-epn-2} 
\includegraphics[width=0.15\linewidth]{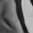}}
\subfigure[\scriptsize{True}]{
\label{bu-ep-2} 
\includegraphics[width=0.15\linewidth]{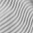}}
\subfigure[\scriptsize{ISTA-Net$^+$}]{
\label{bu-epn-2} 
\includegraphics[width=0.15\linewidth]{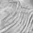}}
\subfigure[\scriptsize{ResGD-Net}]{
\label{bu-epn-2} 
\includegraphics[width=0.15\linewidth]{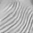}}
\vspace{-5pt}
\caption{Reconstruction results of a butterfly image with CS ratio 10\% and Barbara image with CS ratio 25\% in \textit{Set11} \cite{kulkarni2016reconnet}  using the state-of-the-art ISTA-Net$^+$ \cite{Zhang2018ISTANetIO} and the proposed ResGD-Net. PSNR and reconstruction time: (b) 25.91dB, 0.021s (c) 26.59dB, 0.237s (e) 29.21dB, 0.020s (f) 30.67dB, 0.225s}
\label{inception22-2} 
\vspace{-10pt}
\end{figure}
\subsection{Ablation study:} \label{ablation}
In this part, we chiefly do the ablation study to show the effectiveness of the residual connection, the influence of the number of phases over the results and the parameter efficiency of the proposed ResGD-Net. %

\textbf{The residual connection:}
To show the strength of the residual connection, we compare the test result of ResGD-Net against the gradient descent algorithm inspired network (GD-Net). The PSNR comparison is shown in Fig. \ref{phase-epoch} with various phase numbers $K$ and training epochs. We observe that with residual connection, ResGD-Net obtains much better quality of reconstructed images than the GD Net at each $K$.
As exemplified when $K$ fixed to be 3,
ResGD-Net converges with less training epoch number,
where ResGD-Net converges at around 250 epochs versus GD Net takes about 400 epochs.
%

\textbf{The phase number $K$:}
As shown in Fig. \ref{phase-epoch}, for both ResGD-Net and GD Net, PSNR increases with the increase of phase number $K$. The plot of ResGD-Net turns flat after 19 phases while GD Net does not tend to. Considering the trade-off between reconstruction performance and network complexity, we determine to take $K = 19$ when comparing ResGD-Net with other methods.

\textbf{The parameter efficiency:} \label{para}
The total number of parameters of GD-Net is $\{ A + B + \widetilde{A} + \widetilde{B} + \eta  + \alpha_k \times K = 32\times3\times3 \times(1+32 + 32) + 32\times3\times3 \times(1+32 + 32) + 1+ 19= 37,460\}$ if we take $K = 19$. Similarly, the total number of parameters of 19-phase ResGD-Net is $\{ A + B + \widetilde{A} + \widetilde{B} + \eta  + (\alpha_k +\gamma_k) \times K = 37,479\} $. The number of parameters per phase of ISTA-Net$^+$ is $37,442$ \cite{Zhang2018ISTANetIO}. It can be seen in Table \ref{cs-result} that ResGD-Net outperforms ISTA-Net$^+$ (shared weights) by a large margin (average 0.87 dB PSNR) with similar number of parameters. Even compared with ISTA-Net$^+$ with 9 phases unshared weights, ResGD-Net is still better (average 0.50 dB PSNR), whereas apparently there are far less parameters in ResGD-Net than unshared-weights ISTA-Net$^+$ (37,479 v.s. 336,978).
\begin{figure}[h]
\vspace{-4.5pt}
\centering
\subfigure{
\label{phase_num} 
\includegraphics[width=0.38\linewidth]{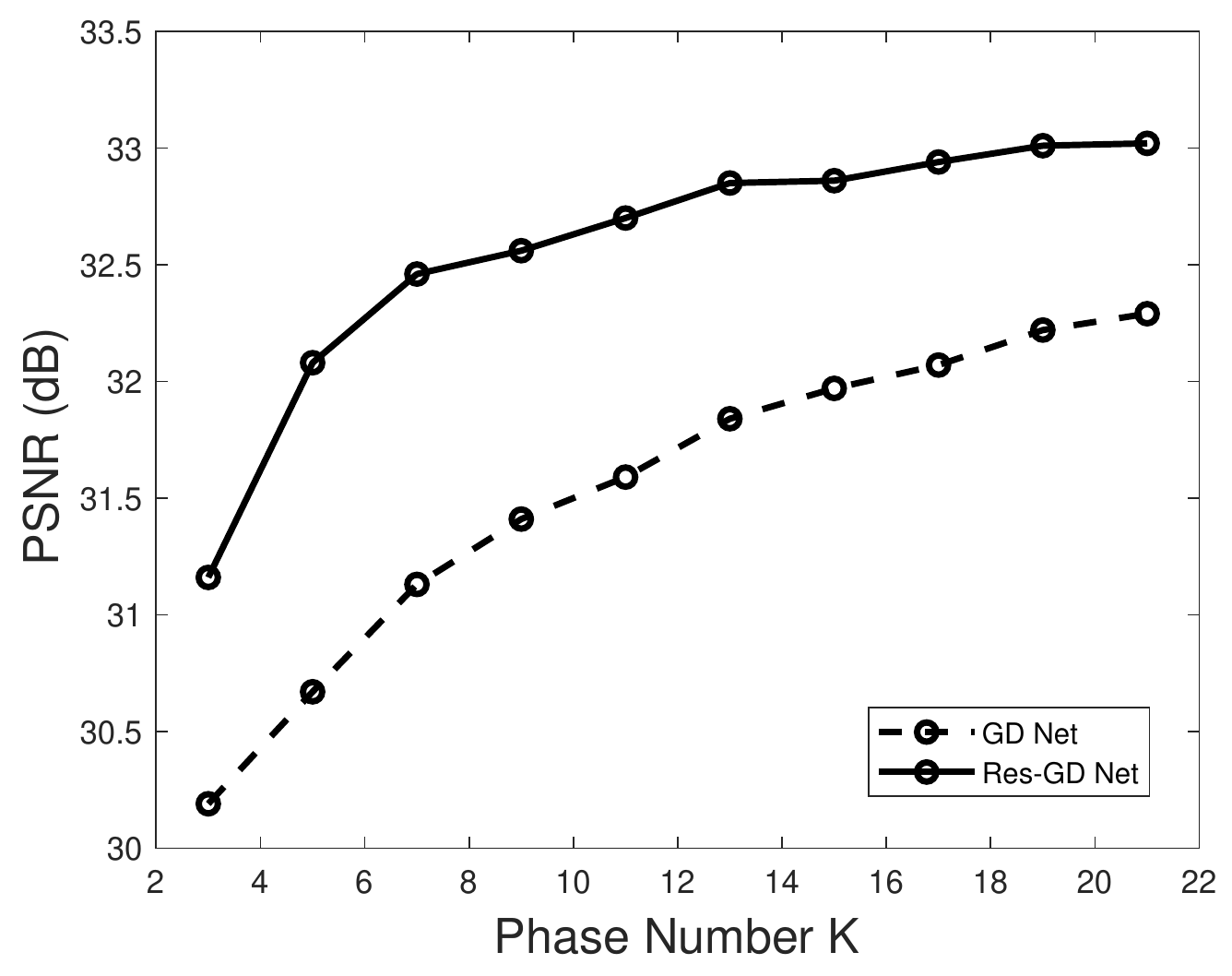}}
\hspace{20pt}
\subfigure{
\label{training_epoch} 
\includegraphics[width=0.39\linewidth]{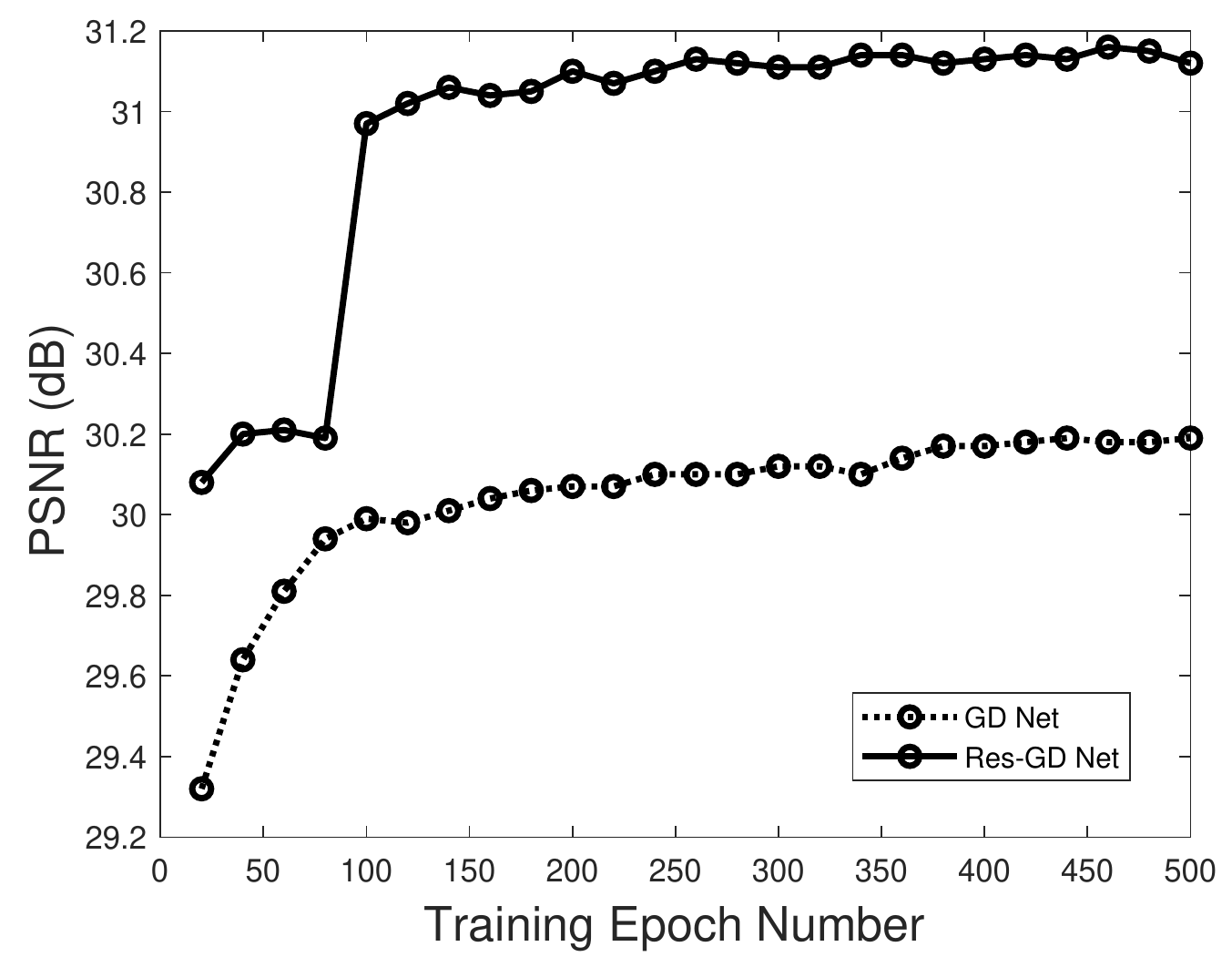}}
\vspace{-3pt}
   \caption{The PSNR comparison evaluated on \textit{Set11} \cite{kulkarni2016reconnet} between ResGD-Net and GD Net with various phase numbers and training epoch when CS ratio is $25\%$. Here, the evaluation on the training epoch is conducted on phase number $K = 3$}
\label{phase-epoch}
\end{figure}
\vspace{-10pt}
\subsection{Medical Image Compressive Sensing} \label{mri}
Medical image compressive sensing is an everlasting practical application in image reconstruction area.
In this section we test the performance of the proposed ResGD-Net on compressive sensing reconstruction of brain MR images \cite{mridata} (CS-MRI).
In CS-MRI problem, the data fidelity term is $f(\xbf;\zbf) = \|\Phi \xbf - \zbf \|^2_2$, where ${\Phi} = \mathcal{P}\mathcal{F}$, $\mathcal{P}$ is a binary selection matrix representating the sampling trajectory, and $\mathcal{F}$ is the discrete Fourier transform.
We randomly pick $150$ images from the brain MRI datasets \cite{mridata}, then crop and keep the central $190\times190$ part with less background. Then we at random divide the dataset to $100$ images for training and $50$ for testing. Among this section, we present the comparison results between ResGD-Net and ISTA-Net$^+$ \cite{Zhang2018ISTANetIO}, where the latter one is a state-of-the-art method in tackling with CS-MRI problem. For fairness, both algorithms compared here are evaluated on the same dataset and metrics. Experiments are conducted across different sampling ratios $10 \%$, $20 \%$ and $30 \%$ of $\mathcal{P}$ to show the generality.
The study of ResGD-Net on different sampling ratios and various phase numbers is shown in Fig. \ref{fig-mri}.
The PSNR comparison with ISTA-Net$^+$ is shown in Table. \ref{www}. The result enhancement of the proposed ResGD-Net against ISTA-Net$^+$ is remarkable across all sampling ratios even though we only use approximately $10 \%$ many number of parameters than ISTA-Net$^+$ \cite{Zhang2018ISTANetIO}.
%
\begin{figure}[h]
\vspace{-2pt}
\begin{center}
   \includegraphics[width=0.50\linewidth]{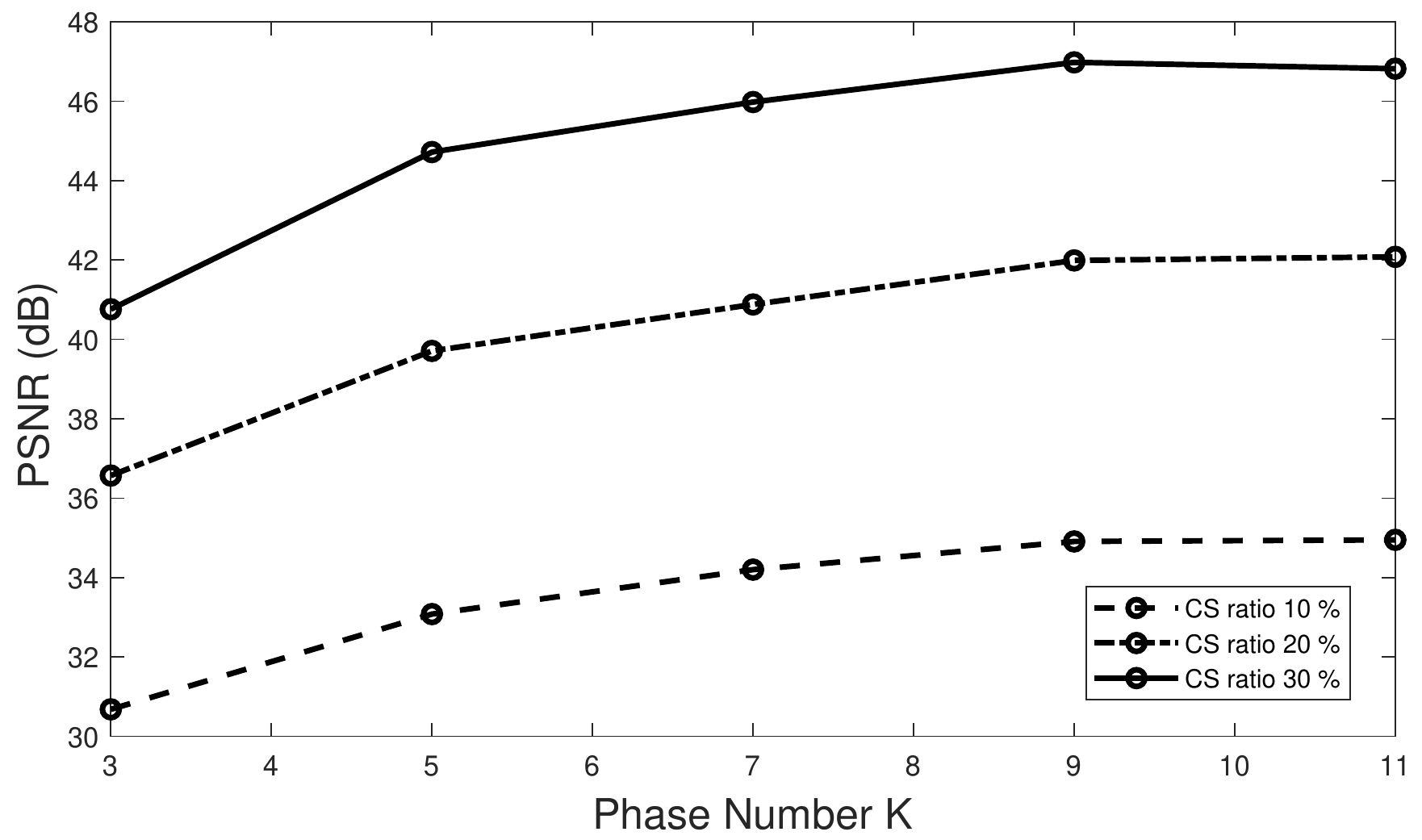}
\end{center}
\vspace{-4pt}
\caption{PSNR (dB) comparison of ResGD-Net on various phase numbers across different CS ratios $10 \%, 20 \%$ and $ 30 \%$ on brain MR images \cite{mridata}}
\label{fig-mri}
\end{figure}
%
\vspace{-20pt}
\begin{table}[h]
\vspace{-6pt}
\centering
\caption{PSNR (dB) of reconstructions obtained by ISTA-Net$^+$ \cite{Zhang2018ISTANetIO} and ResGD-Net (9 phases) on MR images using radial masks with different sampling ratios}
\label{www}
\begin{tabular}{c|c|c|c}
\hline
\textbf{Method}  & \textbf{Sampling ratio 10\%} & \textbf{Sampling ratio 20\%} & \textbf{Sampling ratio 30\%} \\ \hline
ISTA-Net$^+$ & 33.49 & 40.66 & 44.70 \\ \hline
ResGD-Net & 34.91 & 41.99 & 47.00 \\ \hline
\end{tabular}
\vspace{-6pt}
\end{table}

In addition, we provide the visualization results of some selected MR images reconstructed by the state-of-the-art ISTA-Net$^+$ \cite{Zhang2018ISTANetIO} and our proposed ResGD-Net on compressive sensing (CS) ratio $10 \%$, $20 \%$ and $30 \%$. The results are evaluated under metrics the Peak Signal-to-Noise Ratio (PSNR), the Structural Similarity (SSIM) and the Mean Squared Error (MSE). For better visualization, we rescale the pixel value by multiplying $8.0 \times$ on the error maps (the second row of Figs. \ref{inception11} - \ref{inception31}) when displaying.

\begin{figure}[h]
\subfigure{
\label{fl-g} 
\includegraphics[width=0.3125\linewidth]{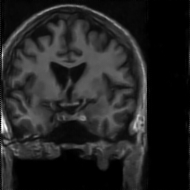}}
\subfigure{
\label{pa-g} 
\includegraphics[width=0.3125\linewidth]{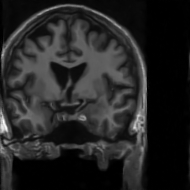}}
\subfigure{
\label{pa-ep} 
\includegraphics[width=0.3125\linewidth]{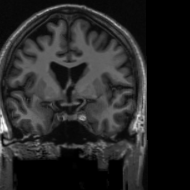}}
\setcounter{subfigure}{0}
\subfigure[{ISTA-Net$^+$}\protect \\PSNR: $29.09$dB \protect \\ SSIM: $0.8919$ \protect \\MSE: $1.231e-3$]{
\label{bu-epn-2} 
\includegraphics[width=0.3125\linewidth]{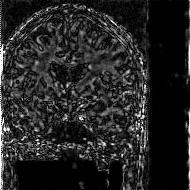}}
\hspace{0.5pt}
\subfigure[{ResGD-Net}\protect \\PSNR: $32.25$dB \protect \\ SSIM: $0.9178$ \protect \\MSE: $5.946e-4$]{
\label{bu-epn-2} 
\includegraphics[width=0.3125\linewidth]{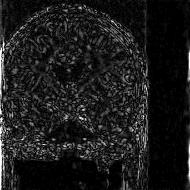}}
\subfigure[{True}]{
\label{bu-ep-2} 
\includegraphics[width=0.3125\linewidth]{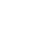}}
\caption{Reconstruction results of a brain MR image \cite{mridata} with radial mask of CS ratio 10\%  using the state-of-the-art ISTA-Net$^+$ \cite{Zhang2018ISTANetIO} and the proposed ResGD-Net. The figures in the second row are the difference images compared to the true image}
\label{inception11} 
\vspace{-10pt}
\end{figure}

\begin{figure}[h]
\subfigure{
\label{fl-g} 
\includegraphics[width=0.3125\linewidth]{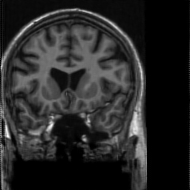}}
\subfigure{
\label{pa-g} 
\includegraphics[width=0.3125\linewidth]{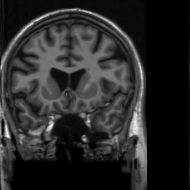}}
\subfigure{
\label{pa-ep} 
\includegraphics[width=0.3125\linewidth]{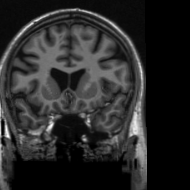}}
\setcounter{subfigure}{0}
\subfigure[{ISTA-Net$^+$}\protect \\PSNR: $31.51$dB \protect \\ SSIM: $0.9452$ \protect \\MSE: $7.069e-4$]{
\label{bu-epn-2} 
\includegraphics[width=0.3125\linewidth]{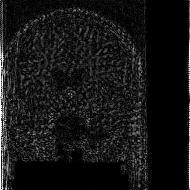}}
\hspace{0.5pt}
\subfigure[{ResGD-Net}\protect \\PSNR: $35.68$dB \protect \\ SSIM: $0.9595$ \protect \\MSE: $2.693e-4$]{
\label{bu-epn-2} 
\includegraphics[width=0.3125\linewidth]{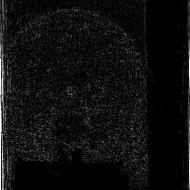}}
\subfigure[{True}]{
\label{bu-ep-2} 
\includegraphics[width=0.3125\linewidth]{fig/white.png}}
\caption{Reconstruction results of a brain MR image \cite{mridata} with radial mask of CS ratio 20\%  using the state-of-the-art ISTA-Net$^+$ \cite{Zhang2018ISTANetIO} and the proposed ResGD-Net. The figures in the second row are the difference images compared to the true image}
\label{inception22-2} 
\vspace{-10pt}
\end{figure}

\begin{figure}[h]
\subfigure{
\label{fl-g} 
\includegraphics[width=0.3125\linewidth]{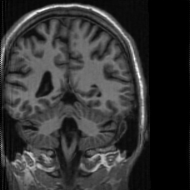}}
\subfigure{
\label{pa-g} 
\includegraphics[width=0.3125\linewidth]{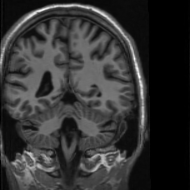}}
\subfigure{
\label{pa-ep} 
\includegraphics[width=0.3125\linewidth]{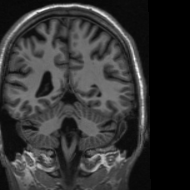}}
\setcounter{subfigure}{0}
\subfigure[{ISTA-Net$^+$}\protect \\PSNR: $35.73$dB \protect \\ SSIM: $0.9564$ \protect \\MSE: $2.671e-4$]{
\label{bu-epn-2} 
\includegraphics[width=0.3125\linewidth]{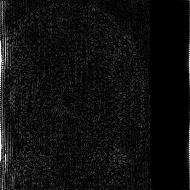}}
\hspace{0.5pt}
\subfigure[{ResGD-Net}\protect \\PSNR: $41.31$dB \protect \\ SSIM: $0.9772$ \protect \\MSE: $7.385e-5$]{
\label{bu-epn-2} 
\includegraphics[width=0.3125\linewidth]{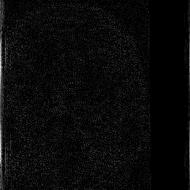}}
\subfigure[{True}]{
\label{bu-ep-2} 
\includegraphics[width=0.3125\linewidth]{fig/white.png}}
\caption{Reconstruction results of a brain MR image \cite{mridata} with radial mask of CS ratio 30\%  using the state-of-the-art ISTA-Net$^+$ \cite{Zhang2018ISTANetIO} and the proposed ResGD-Net. The figures in the second row are the difference images compared to the true image}
\label{inception31} 
\vspace{-10pt}
\end{figure}
\clearpage
 \section{Concluding Remarks}\label{sec:conclusion}
In this paper, motivated by Nestrov's smoothing technique and residual learning, we propose a residual learning inspired learnable gradient descent type algorithm with provable convergence. Then we present how to unroll the  algorithm into a deep neural network architecture. Furthermore, the proposed network is applied to different real-world image reconstruction applications. The numerical results show that our network outperforms several existing state-of-the-art methods by a large margin.

%
%

\bibliographystyle{splncs04}
\bibliography{egbib}

\begin{thebibliography}{10}
\providecommand{\url}[1]{\texttt{#1}}
\providecommand{\urlprefix}{URL }
\providecommand{\doi}[1]{https://doi.org/#1}

\bibitem{abadi2016tensorflow}
Abadi, M., Barham, P., Chen, J., Chen, Z.e.a.: Tensorflow: A system for
  large-scale machine learning. In: 12th Symposium on Operating Systems Design
  and Implementation ($\{$OSDI$\}$ 16). pp. 265--283 (2016)

\bibitem{AO18}
Adler, J., {\"O}ktem, O.: Learned primal-dual reconstruction. IEEE transactions
  on medical imaging  \textbf{37}(6),  1322--1332 (2018)

\bibitem{mridata}
Bennett~Landman, S.W.e.: 2013 diencephalon free challenge.
  doi:10.7303/syn3270353

\bibitem{BSR2017}
Borgerding, M., Schniter, P., Rangan, S.: Amp-inspired deep networks for sparse
  linear inverse problems. IEEE Transactions on Signal Processing
  \textbf{65}(16),  4293--4308 (2017)

\bibitem{chang2017one}
Chang, J.R., Li, C.L., Poczos, B., Kumar, B.V.: One network to solve them all:
  solving linear inverse problems using deep projection models. In: 2017 ICCV.
  pp. 5889--5898. IEEE (2017)

\bibitem{CLWY2018}
Chen, X., Liu, J., Wang, Z., Yin, W.: Theoretical linear convergence of
  unfolded ista and its practical weights and thresholds. In: NIPS. pp.
  9061--9071 (2018)

\bibitem{DLH14}
Dong, C., Loy, C.C., He, K., Tang, X.: Learning a deep convolutional network
  for image super-resolution. In: ECCV (2014)

\bibitem{dumoulin2016guide}
Dumoulin, V., Visin, F.: A guide to convolution arithmetic for deep learning.
  arXiv preprint arXiv:1603.07285  (2016)

\bibitem{Glorot10understandingthe}
Glorot, X., Bengio, Y.: Understanding the difficulty of training deep
  feedforward neural networks. In: In Proceedings of the International
  Conference on Artificial Intelligence and Statistics. Society for Artificial
  Intelligence and Statistics (2010)

\bibitem{KY2010}
Gregor, K., {LeCun}, Y.: Learning fast approximations of sparse coding. In:
  F\"urnkranz, J., Joachims, T. (eds.) ICML 2010. pp. 399--406. Haifa, Israel
  (Jun 2010)

\bibitem{ResNet}
He, K., Zhang, X., Ren, S., Sun, J.: Deep residual learning for image
  recognition. In: CVPR. pp. 770--778 (2016)

\bibitem{he2016identity}
He, K., Zhang, X., Ren, S., Sun, J.: Identity mappings in deep residual
  networks. In: European conference on computer vision. pp. 630--645. Springer
  (2016)

\bibitem{HSW89}
Hornik, K., Stinchcombe, M., White, H.: Multilayer feedforward networks are
  universal approximators. Neural networks  \textbf{2}(5),  359--366 (1989)

\bibitem{kingma2014adam}
Kingma, D.P., Ba, J.: Adam: A method for stochastic optimization. arXiv
  preprint arXiv:1412.6980  (2014)

\bibitem{kulkarni2016reconnet}
Kulkarni, K., Lohit, S., Turaga, P., Kerviche, R., Ashok, A.: Reconnet:
  Non-iterative reconstruction of images from compressively sensed
  measurements. In: CVPR. pp. 449--458 (2016)

\bibitem{LeB17}
Le, H., Borji, A.: What are the receptive, effective receptive, and projective
  fields of neurons in convolutional neural networks? CoRR
  \textbf{abs/1705.07049} (2017), \url{http://arxiv.org/abs/1705.07049}

\bibitem{li2013efficient}
Li, C., Yin, W., Jiang, H., Zhang, Y.: An efficient augmented lagrangian method
  with applications to total variation minimization. Computational Optimization
  and Applications  \textbf{56}(3),  507--530 (2013)

\bibitem{li2015accelerated}
Li, H., Lin, Z.: Accelerated proximal gradient methods for nonconvex
  programming. In: Advances in neural information processing systems. pp.
  379--387 (2015)

\bibitem{LCW2019}
Liu, J., Chen, X., Wang, Z., Yin, W.: Alista: Analytic weights are as good as
  learned weights in lista. ICLR  (2019)

\bibitem{LPW17}
Lu, Z., Pu, H., Wang, F., Hu, Z., Wang, L.: The expressive power of neural
  networks: A view from the width. In: NIPS. pp. 6231--6239 (2017)

\bibitem{meinhardt2017learning}
Meinhardt, T., Moller, M., Hazirbas, C., Cremers, D.: Learning proximal
  operators: Using denoising networks for regularizing inverse imaging
  problems. In: ICCV. pp. 1781--1790 (2017)

\bibitem{metzler2016denoising}
Metzler, C.A., Maleki, A., Baraniuk, R.G.: From denoising to compressed
  sensing. IEEE Transactions on Information Theory  \textbf{62}(9),  5117--5144
  (2016)

\bibitem{Nesterov}
Nesterov, Y.E.: A method for solving the convex programming problem with
  convergence rate $o(1/k^2)$. Dokl. Akad. Nauk SSSR  \textbf{269},  543--547
  (1983), \url{https://ci.nii.ac.jp/naid/10029946121/en/}

\bibitem{nesterov2005smooth}
Nesterov, Y.: Smooth minimization of non-smooth functions. Mathematical
  programming  \textbf{103}(1),  127--152 (2005)

\bibitem{SBS2015}
Sprechmann, P., Bronstein, A.M., Sapiro, G.: Learning efficient sparse and low
  rank models. IEEE transactions on pattern analysis and machine intelligence
  \textbf{37}(9),  1821--1833 (2015)

\bibitem{wang2016proximal}
Wang, S., Fidler, S., Urtasun, R.: Proximal deep structured models. In:
  Advances in Neural Information Processing Systems. pp. 865--873 (2016)

\bibitem{xie2017aggregated}
Xie, S., Girshick, R., Doll{\'a}r, P., Tu, Z., He, K.: Aggregated residual
  transformations for deep neural networks. In: Proceedings of the IEEE
  conference on computer vision and pattern recognition. pp. 1492--1500 (2017)

\bibitem{XWZ2019}
Xie, X., Wu, J., Zhong, Z., Liu, G., Lin, Z.: Differentiable linearized admm.
  arXiv preprint arXiv:1905.06179  (2019)

\bibitem{XWG16}
Xin, B., Wang, Y., Gao, W., Wipf, D., Wang, B.: Maximal sparsity with deep
  networks? In: NIPS. pp. 4340--4348 (2016)

\bibitem{NIPS2016_6406}
Yang, Y., Sun, J., Li, H., Xu, Z.: Deep admm-net for compressive sensing mri.
  In: Lee, D.D., Sugiyama, M., Luxburg, U.V., Guyon, I., Garnett, R. (eds.)
  NIPS 29, pp. 10--18. Curran Associates, Inc. (2016),
  \url{http://papers.nips.cc/paper/6406-deep-admm-net-for-compressive-sensing-mri.pdf}

\bibitem{Yarotsky}
Yarotsky, D.: Error bounds for approximations with deep relu networks. Neural
  Networks  \textbf{94},  103--114 (2017)

\bibitem{zagoruyko2016wide}
Zagoruyko, S., Komodakis, N.: Wide residual networks. arXiv preprint
  arXiv:1605.07146  (2016)

\bibitem{Zhang2018ISTANetIO}
Zhang, J., Ghanem, B.: Ista-net: Interpretable optimization-inspired deep
  network for image compressive sensing. In: CVPR (2018)

\bibitem{zhang2017learning}
Zhang, K., Zuo, W., Gu, S., Zhang, L.: Learning deep cnn denoiser prior for
  image restoration. In: CVPR. pp. 3929--3938 (2017)

\end{thebibliography}
\end{document}